\def\BibTeX{{\rm B\kern-.05em{\sc i\kern-.025em b}\kern-.08em
		T\kern-.1667em\lower.7ex\hbox{E}\kern-.125emX}}
\newcolumntype{H}{>{\setbox0=\hbox\bgroup}c<{\egroup}@{}}
\newcommand{\bigCI}{\mathrel{\text{\scalebox{1.07}{$\perp\mkern-10mu\perp$}}}}
\newtheorem{theorem}{Theorem}
\title{Treatment Effect Estimation with Disentangled Latent Factors}
\author {
        Weijia Zhang\footnote{Corresponding Author},
        Lin Liu,
        Jiuyong Li \\
}
\begin{document}

\maketitle

\begin{abstract}
Much research has been devoted to the problem of estimating treatment effects from observational data; however, most methods assume that the observed variables only contain confounders, i.e., variables that affect both the treatment and the outcome. Unfortunately, this assumption is frequently violated in real-world applications, since some variables only affect the treatment but not the outcome, and vice versa. Moreover, in many cases only the proxy variables of the underlying confounding factors can be observed. 
In this work, we first show the importance of differentiating confounding factors from instrumental and risk factors for both average and conditional average treatment effect estimation, and then we propose a variational inference approach to simultaneously infer latent factors from the observed variables, disentangle the factors into three disjoint sets corresponding to the instrumental, confounding, and risk factors, and use the disentangled factors for treatment effect estimation.
Experimental results demonstrate the effectiveness of the proposed method on a wide range of synthetic, benchmark, and real-world datasets.
\end{abstract}

\section*{Introduction}
Estimating the effect of a treatment on an outcome is a fundamental problem faced by many researchers and has a wide range of applications across diverse disciplines.
In social economy, policy makers need to determine whether a job training program will improve the employment perspective of the workers \cite{Athey2015}. 
In online advertisement, companies need to predict whether an advertisement campaign could persuade a potential buyer into buying the product \cite{Rzepakowski2011}. 


To estimate treatment effect from observational data, the treatment assignment mechanism needs to be independent of the possible outcomes when conditioned on the observed variables, i.e., the unconfoundedness assumption \cite{Rosenbaum1983} needs to be satisfied. With this assumption, treatment effects can be estimated from observational data by adjusting on the confounding variables which affects both the treatment assignment and the outcome. The treatment effect estimation may be biased if not all the confounders are considered in the estimation \cite{Pearl2009}.

From a theoretical perspective, practitioners are tempted to include as many variables as possible to ensure the satisfaction of the unconfoundedness assumption.
This is because confounders can be difficult to measure in the real-world and practitioners need to include noisy proxy variables to ensure unconfoundedness. 
For example, the socio-economic status of patients confounds treatment and prognosis, but cannot be included in the electronic medical records due to privacy concerns. It is often the case that such unmeasured confounders can be inferred from noisy proxy variables which are easier to measure. For instance, the zip codes and job types of patients can be used as proxies to infer their socio-economic statuses \cite{Sauer2013}. 

From a practical perspective, the inflated number of variables included for confounding adjustment reduces the efficiency of treatment effect estimation. 
Moreover, it has been previously shown that including unnecessary covariates is suboptimal when the treatment effect is estimated non-parametrically \cite{Hahn1998,Abadie2006,Haeggstroem2017}. In a high dimensional scenario, eventually many included variables will not be confounders and should be excluded from the set of adjustment variables. 


Most existing treatment estimation algorithms treat the given variables ``as is", and leave the task of choosing confounding variables to the user. 
It is clear that the users are left with a dilemma: on the one hand including more variables than necessary produces inefficient and inaccurate estimators; on the other hand restricting the number of adjustment variables may exclude confounders themselves or proxy variables of the confounders and thus increases the bias of the estimated treatment effects. 
With only a handful of variables, the problem can be avoided by consulting domain experts. 
However, a data-driven approach is required in the big data era to deal with the dilemma.

In this work, we propose a data-driven approach for simultaneously inferencing latent factors from proxy variables and disentangling the latent factors into three disjoint sets as illustrated in Figure \ref{illustration}: the instrumental factors $\mathbf{z}_t$ which only affect the treatment but not the outcome, the risk factors $\mathbf{z}_y$ which only affect the outcome but not the treatment, and the confounding factors $\mathbf{z}_c$ that affect both the treatment and the outcome.
Since our method builds upon the recent advancement of the research on variational autoencoder \cite{Kingma2014}, we name our method Treatment Effect by Disentangled Variational AutoEncoder (TEDVAE).
Our main contributions are:
\begin{itemize}
	\item We address an important problem in treatment effect estimation from observational data, where the observed variable may contain confounders, proxies of confounders and non-confounding variables. 
	\item We propose a data-driven algorithm, TEDVAE, to simultaneously infer latent factors from proxy variables and disentangle confounding factors from the others for a more efficient and accurate treatment effect estimation. 
	\item We validate the effectiveness of the proposed TEDVAE algorithm on a wide range of synthetic datasets, treatment effect estimation benchmarks and real-world datasets. 
\end{itemize}

The rest of this paper is organized as follows. In Section 2, we discuss related works. The details of TEDVAE is presented in Section 3. In Section 4, we discuss the evaluation metrics, datasets and experiment results. Finally, we conclude the paper in Section 5.

\begin{figure}[!t]
	\centering
	\includegraphics[width = 0.9\columnwidth]{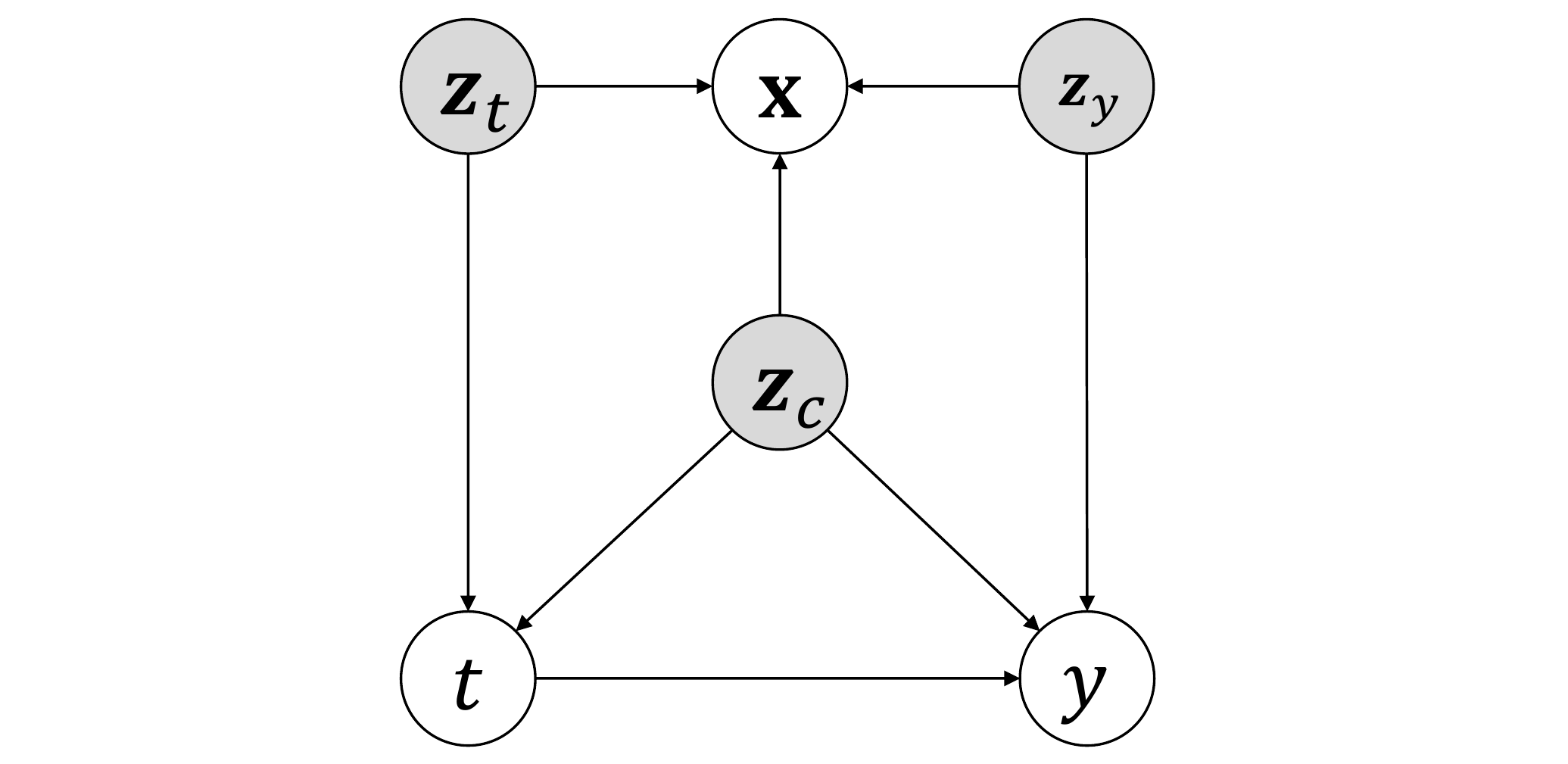}
	\caption{Model diagram for the proposed Treatment Effect with Disentangled Autoencoder (TEDVAE). $t$ is the treatment, $y$ is the outcome. $\mathbf{x}$ is the ``as-is" observed variables which may contain non-confounders and noisy proxy variables. 
		$\mathbf{z}_t$ are factors that affect only the treatment, $\mathbf{z}_y$ are factors that affect only the outcome, and $\mathbf{z}_c$ are confounding factors that affect both treatment and outcome. 
	}
	\label{illustration}
\end{figure}
\section*{Related Work}
Treatment effect estimation has steadily drawn the attentions of researchers from the statistics and machine learning communities. During the past decade,  
several tree based methods \cite {Su2009,Athey2015,Zhang2017,Zhang2018} have been proposed to address the problem by designing a treatment effect specific splitting criterion for recursive partitioning. 
Ensemble algorithms and meta algorithms \cite{Kuenzel2019,Wager2018} have also been explored. For example, Causal Forest\cite{Wager2018} builds ensembles using the Causal Tree \cite{Athey2015} as base learners. X-Learner \cite{Kuenzel2019} is a meta algorithm that can utilize off-the-shelf machine learning algorithms for treatment effect estimation. 

Deep learning based heterogeneous treatment effect estimation methods have attracted increasingly research interest in recent years \cite{Shalit2016,Alaa2018,Louizos2017,Hassanpour2018,Yao2018_Twin,Yoon2018}. 
Counterfactual Regression Net \cite{Shalit2016} and several other methods \cite{Yao2018_Twin,Hassanpour2018} have been proposed to reduce the discrepancy between the treated and untreated groups of samples by learning a representation such that the two groups are as close to each other as possible.
However, their designs do not separate the covariates that only contribute to the treatment assignment from those only contribute to the outcomes. Furthermore, these methods are not able to infer latent covariates from proxies. 

Variable decomposition \cite{Kun2017,Haeggstroem2017} has been previously investigated for average treatment effect estimation. 
Our method has several major differences from the above methods: (i) our method is capable of estimating the individual level heterogeneous treatment effects, where existing ones only focus on the population level average treatment effect; (ii) we are able to identify the non-linear relationships between the latent factors and their proxies, whereas their approach only models linear relationships. Recently, a deep representation learning based method, DR-CFR \cite{Hassanpour2020} is proposed for treatment effect estimation.

Another work closely related to ours is the Causal Effect Variational Autoencoder (CEVAE) \cite{Louizos2017}, which also utilizes variational autoencoder to learn confounders from observed variables. However, CEVAE does not consider the existence of non-confounders, and is not able to learn the separated sets of instrumental and risk factors. 
As demonstrated by the experiments, disentangling the factors significantly improves the performance.

\section*{Method}

\subsection*{Preliminaries}
Let $t_i \in \{0,1\}$ denote a binary treatment where $t_i = 0$ indicates the $i$-th individual receives no treatment (control) and $t_i = 1$ indicates the individual receives the treatment (treated). We use $y_i(1)$ to denote the potential outcome of $i$ if it were treated, and $y_i(0)$ to denote the potential outcome if it were not treated. Noting that only one of the potential outcomes can be realized, and the observed outcome is $y_i = (1-t_i)y_i(0) + t_i y_i(1)$. Additionally, let $\mathbf{x}_i \in \mathcal{R}^d$ denote the ``as is'' set of covariates for the $i$-th individual. When the context is clear, we omit the subscript $i$ in the notations.

Throughout the paper, we assume that the following three fundamental assumptions for treatment effect estimations \cite{Rosenbaum1983} are satisfied:
\newtheorem{assumption}{Assumption}
\begin{assumption}
	(\textbf{SUTVA)} The Stable Unit Treatment Value Assumption requires that the potential outcomes for one unit (individual) is unaffected by the treatment of others.
\end{assumption}

\begin{assumption}
	(\textbf{Unconfoundedness)} The distribution of treatment is independent of the potential outcome when conditioning on the observed variables: $t \bigCI (y(0), y(1)) | \mathbf{x}$.
	\label{unconfound}
\end{assumption}

\begin{assumption}
	(\textbf{Overlap)} Every unit has a non-zero probability to receive either treatment or control when given the observed variables, i.e., $0< P(t=1|\mathbf{x})<1$.
	\label{overslap}
\end{assumption}

The first goal of treatment effect estimation is estimating the average treatment effect (ATE) which is defined as: $ATE = \mathbb{E}[y(1) - y(0)] = \mathbb{E}[y|do(t=1)] - \mathbb{E}[y|do(t=0)]$,
where $do(t=1)$ denote an manipulation on $t$ by removing all its incoming edges and setting $t=1$ \cite{Pearl2009}.

The treatment effect for an individual $i$ is defined as $\tau_i = y_i(1) - y_i(0)$.
Due to the counterfactual problem, we never observe $y_i(1)$ and $y_i(0)$ simultaneously and thus $\tau_i$ is not observed for any individual. Instead, we estimate the conditional average treatment effect $(\tau(x))$, defined as $\tau(\mathbf{x}) \vcentcolon = \mathbb{E}[\tau|\mathbf{x}] =  \mathbb{E}[y|\mathbf{x}, do(t=1)] - \mathbb{E}[y|\mathbf{x},do(t=0)]$.


\subsection*{Treatment Effect Estimation from Latent Factors}
%
%
%

In this work, we propose the TEDVAE model (Figure \ref{illustration}) for estimating the treatment effects, where the observed pre-treatment variables $\mathbf{x}$ can be viewed as generated from three disjoint sets of latent factors $\mathbf{z} = (\mathbf{z}_t,\mathbf{z}_c, \mathbf{z}_y)$. 
Here $\mathbf{z}_t$ are instrumental factors that only affect the treatment but not the outcome, 
$\mathbf{z}_y$ are risk factors which only affect the outcome but not the treatment, 
and $\mathbf{z}_c$ are confounding factors that affect both the treatment and the outcome. 

On the one hand, the proposed TEDVAE model in Figure \ref{illustration} provides two important benefits. The first one is that by explicitly modelling for the instrumental factors and adjustment factors, it accounts for the fact that not all variables in the observed variables set $\mathbf{x}$ are confounders. The second benefit is that it allows for the possibility of learning unobserved confounders that from their proxy variables.

On the other hand, our model diagram does not pose any restriction other than the three standard assumptions discussed in Section 3.1. 
To see this, consider the case where every variable in $\mathbf{x}$ itself is a confounder, i.e., $\mathbf{x}=\mathbf{x}_c$, then the generating mechanism in the TEDVAE model becomes $\mathbf{z_c} = \mathbf{x}$ with $\mathbf{z}_t=\mathbf{z}_y=\emptyset$ and the model in Figure \ref{illustration} becomes identical to the widely used diagram for treatment effect estimation (Figure 2 in \cite{Imbens2019}).

With our model, the estimation of treatment effect is immediate using the back-door criterion \cite{Pearl2009}:
\begin{theorem}
	The effect of $t$ on $y$ can be identified if we recover the confounding factors $\mathbf{z}_c$ from the data.
	\label{ATE}
\end{theorem}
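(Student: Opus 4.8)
The plan is to verify that the set of confounding factors $\mathbf{z}_c$ satisfies the back-door criterion \cite{Pearl2009} relative to the ordered pair $(t,y)$ in the diagram of Figure~\ref{illustration}, and then to invoke the back-door adjustment formula to express the interventional distribution $P(y\mid do(t))$ in terms of quantities that depend only on $\mathbf{z}_c$ and the observational data. First I would read off the edge structure of the generative model: the mutually independent latent factors $\mathbf{z}_t,\mathbf{z}_c,\mathbf{z}_y$ are the source nodes; $t$ has parents $\{\mathbf{z}_t,\mathbf{z}_c\}$; $y$ has parents $\{t,\mathbf{z}_c,\mathbf{z}_y\}$; and the observed proxies $\mathbf{x}$ have parents $\{\mathbf{z}_t,\mathbf{z}_c,\mathbf{z}_y\}$. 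The key structural observation is that $\mathbf{x}$ is a common child, hence a collider, on every path that passes through it.

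Next I would check the two conditions of the back-door criterion. Condition (i) is immediate: because $\mathbf{z}_c$ is a parent of $t$ and the graph is acyclic, no node of $\mathbf{z}_c$ is a descendant of $t$. For condition (ii) I would enumerate the back-door paths, i.e. those leaving $t$ through an incoming arrow. The only such path not routed through $\mathbf{x}$ is $t \leftarrow \mathbf{z}_c \to y$, which is a fork at $\mathbf{z}_c$ and is therefore blocked the moment we condition on $\mathbf{z}_c$. Every remaining back-door path must traverse $\mathbf{x}$, for example $t \leftarrow \mathbf{z}_t \to \mathbf{x} \leftarrow \mathbf{z}_c \to y$ or $t \leftarrow \mathbf{z}_t \to \mathbf{x} \leftarrow \mathbf{z}_y \to y$; on each of these $\mathbf{x}$ is a collider, and since we condition on $\mathbf{z}_c$ alone and not on $\mathbf{x}$ or any of its descendants, these colliders stay closed. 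The one subtlety worth stating explicitly is that conditioning on $\mathbf{z}_c$ cannot inadvertently activate the collider at $\mathbf{x}$, because $\mathbf{z}_c$ is an ancestor of $\mathbf{x}$ rather than a descendant.

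Having verified the criterion, the conclusion follows from the back-door adjustment formula, which yields
\[
P(y\mid do(t)) = \int P(y\mid t,\mathbf{z}_c)\,P(\mathbf{z}_c)\,\diff\mathbf{z}_c ,
\]
so that the average treatment effect is identified as
\[
\mathbb{E}[y\mid do(t{=}1)] - \mathbb{E}[y\mid do(t{=}0)] = \int \big( \mathbb{E}[y\mid t{=}1,\mathbf{z}_c] - \mathbb{E}[y\mid t{=}0,\mathbf{z}_c] \big)\,P(\mathbf{z}_c)\,\diff\mathbf{z}_c ,
\]
and, by retaining the conditioning on $\mathbf{z}_c$, the heterogeneous effect $\tau(\mathbf{z}_c)$ as well. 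I expect the main obstacle to be the path-blocking argument of the second paragraph rather than the final algebra: one must account for \emph{all} back-door paths created by the shared proxy node $\mathbf{x}$ and justify that its collider status keeps them inactive once only $\mathbf{z}_c$, and neither $\mathbf{z}_t$ nor $\mathbf{z}_y$ nor $\mathbf{x}$, is conditioned on.
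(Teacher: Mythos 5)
Your proof is correct, but it takes a genuinely different route from the paper's. The paper does not verify the back-door criterion for $\mathbf{z}_c$ directly; it starts from Pearl's adjustment-for-direct-causes formula (his Eq.\ (3.13)), adjusting for the full parent set $\{\mathbf{z}_t,\mathbf{z}_c\}$ of $t$, and then prunes $\mathbf{z}_t$ algebraically using the conditional independence $y \bigCI \mathbf{z}_t \mid t, \mathbf{z}_c$ together with the marginalization $\sum_{\mathbf{z}_t} P(\mathbf{z}_t)=1$, arriving at the same formula $P(y|do(t)) = \sum_{\mathbf{z}_c} P(y|t,\mathbf{z}_c)P(\mathbf{z}_c)$. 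You instead check the two graphical conditions of the back-door criterion for the set $\{\mathbf{z}_c\}$ alone and then invoke the back-door adjustment theorem as a black box. The two arguments rest on the same underlying d-separation facts: the independence $y \bigCI \mathbf{z}_t \mid t,\mathbf{z}_c$ that the paper asserts holds precisely because every path from $\mathbf{z}_t$ to $y$ routed through the proxy node $\mathbf{x}$ is closed at the collider $\mathbf{x}$ --- a point the paper leaves implicit and you make explicit. Your version is therefore more careful about the role of $\mathbf{x}$ (in particular, about why conditioning on $\mathbf{z}_c$, an ancestor rather than a descendant of $\mathbf{x}$, cannot open that collider), which is the part of the argument most likely to be questioned; the paper's version is more self-contained computationally, requiring only the parent-adjustment formula rather than the full back-door theorem. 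Both are valid and yield the same identification result.
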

\begin{proof}
	From Figure \ref{illustration} we know that $\mathbf{z}_t, \mathbf{z}_c$ are the parents of the treatment $t$, following (3.13) in \citeauthor{Pearl2009} we have,
	\begin{equation}
	P(y|do(t)) = \sum_{\mathbf{z}_t} \sum_{\mathbf{z}_c} P(y|t, \mathbf{z}_t,\mathbf{z}_c) P(\mathbf{z}_t)P(\mathbf{z}_c).
	\end{equation}
	Utilizing the fact that $y \bigCI \mathbf{z_t} |t, \mathbf{z_c}$, we have
	\begin{equation}
	\resizebox{0.89\hsize}{!}{
	$P(y|do(t)) = \sum\limits_{\mathbf{z}_t} P(\mathbf{z}_t) \sum\limits_{\mathbf{z}_c} P(y|t, \mathbf{z}_c) P(\mathbf{z}_c | t, \mathbf{z}_c, \mathbf{z}_t).$}
	\end{equation}
	Furthermore, since $\mathbf{z}_c$ is not a descendant of $t$, by Markov property we have $t \bigCI \mathbf{z}_c | \mathbf{z}_c,\mathbf{z}_t$. Therefore
	\begin{equation}
	P(y|do(t)) = \sum\limits_{\mathbf{z}_t} P(\mathbf{z}_t) \sum\limits_{\mathbf{z}_c} P(y|t, \mathbf{z}_c) P(\mathbf{z}_c | \mathbf{z}_c, \mathbf{z}_t).
	\end{equation}
	Note that $\sum\limits_{\mathbf{z}_t} P(\mathbf{z}_t) P(\mathbf{z}_c|\mathbf{z}_t,\mathbf{z_c}) = P(\mathbf{z}_c)$, which gives us
	\begin{equation*}
	P(y|do(t)) =\sum\limits_{\mathbf{z}_c} P(y|t,\mathbf{z}_c) P(\mathbf{z}_c).  \qedhere
	\end{equation*}
\end{proof}

For the estimation of the conditional average treatment effect, our result follows from Theorem 3.4.1 in \cite{Pearl2009} as shown in the following theorem:
\begin{theorem}
	The conditional average treatment effect of $t$ on $y$ conditioned on $\mathbf{x}$ can be identified if we recover the confounding factors $\mathbf{z}_c$ and risk factors $\mathbf{z}_y$ .
	\label{ITE}
\end{theorem}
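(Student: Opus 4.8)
The plan is to identify the conditional interventional mean $\mathbb{E}[y\mid\mathbf{x},do(t)]$ for each fixed value of $t$; since $\tau(\mathbf{x})$ is simply the difference of this quantity at $t=1$ and $t=0$, identifying it for an arbitrary $t$ suffices. The overall strategy mirrors the proof of Theorem \ref{ATE}, but now the conditioning event $\mathbf{x}$ must be carried through every step, and this is exactly what will prevent the risk factors $\mathbf{z}_y$ from being marginalised away as the instrumental factors $\mathbf{z}_t$ are.

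First I would marginalise over the latent factors $\mathbf{z}=(\mathbf{z}_t,\mathbf{z}_c,\mathbf{z}_y)$ that generate $\mathbf{x}$. Because $\mathbf{x}$ and every factor in $\mathbf{z}$ are non-descendants of $t$ in Figure \ref{illustration}, intervening on $t$ does not alter the posterior, i.e.\ $P(\mathbf{z}\mid\mathbf{x},do(t))=P(\mathbf{z}\mid\mathbf{x})$; and since $\mathbf{x}$ is a child of $\mathbf{z}$ alone, it carries no further information about $y$ once all of $\mathbf{z}$ is given, so that $y\bigCI\mathbf{x}\mid\mathbf{z},do(t)$. Combining these two facts yields
\begin{equation*}
P(y\mid\mathbf{x},do(t)) = \sum_{\mathbf{z}_t}\sum_{\mathbf{z}_c}\sum_{\mathbf{z}_y} P\bigl(y\mid\mathbf{z}_t,\mathbf{z}_c,\mathbf{z}_y,do(t)\bigr)\,P(\mathbf{z}_t,\mathbf{z}_c,\mathbf{z}_y\mid\mathbf{x}).
\end{equation*}

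Next I would collapse the inner term using the independences read off the graph. Because $\mathbf{z}_t$ only affects the treatment, $y\bigCI\mathbf{z}_t\mid t,\mathbf{z}_c,\mathbf{z}_y$, and because $\mathbf{z}_c$ blocks every back-door path from $t$ to $y$ (the same fact exploited in Theorem \ref{ATE}), the do-calculus rules of \cite{Pearl2009} (Theorem 3.4.1) let me drop the intervention, giving $P(y\mid\mathbf{z}_t,\mathbf{z}_c,\mathbf{z}_y,do(t))=P(y\mid t,\mathbf{z}_c,\mathbf{z}_y)$. Substituting this and summing $\mathbf{z}_t$ out of the posterior leaves
\begin{equation*}
P(y\mid\mathbf{x},do(t)) = \sum_{\mathbf{z}_c}\sum_{\mathbf{z}_y} P(y\mid t,\mathbf{z}_c,\mathbf{z}_y)\,P(\mathbf{z}_c,\mathbf{z}_y\mid\mathbf{x}),
\end{equation*}
which expresses $\tau(\mathbf{x})$ through the outcome model $P(y\mid t,\mathbf{z}_c,\mathbf{z}_y)$ and the posterior $P(\mathbf{z}_c,\mathbf{z}_y\mid\mathbf{x})$. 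Both become available once $\mathbf{z}_c$ and $\mathbf{z}_y$ are inferred, while $\mathbf{z}_t$ has dropped out entirely, explaining why the instrumental factors need not be recovered but the risk factors must.

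The step I expect to be the main obstacle is justifying that the conditioning on $\mathbf{x}$ can be pushed through the intervention without incurring collider bias. Since $\mathbf{x}$ is a common child of $\mathbf{z}_t,\mathbf{z}_c,\mathbf{z}_y$, conditioning on it opens dependencies among the factors, so I must keep the outcome model $P(y\mid t,\mathbf{z}_c,\mathbf{z}_y)$ free of any further conditioning on $\mathbf{x}$ and let the dependence on $\mathbf{x}$ enter only through the posterior $P(\mathbf{z}_c,\mathbf{z}_y\mid\mathbf{x})$. Concretely this comes down to verifying, from the d-separation structure of Figure \ref{illustration}, the independences $y\bigCI\mathbf{x}\mid\mathbf{z},do(t)$ and $y\bigCI\mathbf{z}_t\mid t,\mathbf{z}_c,\mathbf{z}_y$ together with the back-door validity of $\mathbf{z}_c$; once these are in hand the remaining manipulations are the same bookkeeping as in the proof of Theorem \ref{ATE}.
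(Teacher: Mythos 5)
Your proof is correct, and it reaches the same conclusion via the same do-calculus core as the paper: Rule 1 with $y\bigCI\mathbf{z}_t\mid t,\mathbf{z}_c,\mathbf{z}_y$ in $G_{\overline{t}}$ to discard the instrumental factors, then Rule 2 with $y\bigCI t\mid\mathbf{z}_c,\mathbf{z}_y$ in $G_{\underline{t}}$ to replace the intervention by ordinary conditioning. Where you genuinely diverge is in the treatment of $\mathbf{x}$. The paper simply writes $P(y\mid do(t),\mathbf{x})=P(y\mid do(t),\mathbf{z}_t,\mathbf{z}_c,\mathbf{z}_y)$, i.e.\ it identifies conditioning on the observed proxies with conditioning on the latent factors themselves; this is clean but implicitly assumes the latents are (effectively) determined by $\mathbf{x}$. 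You instead expand $P(y\mid\mathbf{x},do(t))$ by marginalising over the posterior $P(\mathbf{z}\mid\mathbf{x})$, justified by the invariance of non-descendants under the intervention and by $y\bigCI\mathbf{x}\mid\mathbf{z}$ in $G_{\overline{t}}$, arriving at $\sum_{\mathbf{z}_c,\mathbf{z}_y}P(y\mid t,\mathbf{z}_c,\mathbf{z}_y)P(\mathbf{z}_c,\mathbf{z}_y\mid\mathbf{x})$. This buys you two things: the identification formula remains valid when $\mathbf{z}$ is only probabilistically inferable from its proxies (which is exactly the regime TEDVAE operates in, since the encoders output posteriors rather than point estimates), and your expression matches the algorithm's actual prediction procedure of sampling $\mathbf{z}_c,\mathbf{z}_y$ from $q(\cdot\mid\mathbf{x})$ and averaging $q_{\omega_y}(y\mid t,\mathbf{z}_c,\mathbf{z}_y)$. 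Your closing remark about keeping the outcome model free of conditioning on the collider $\mathbf{x}$, and letting the induced dependence between $\mathbf{z}_c$ and $\mathbf{z}_y$ live only in the joint posterior $P(\mathbf{z}_c,\mathbf{z}_y\mid\mathbf{x})$, is a correct and worthwhile precaution that the paper's shorter argument glosses over.
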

\begin{proof}
	Let $G_{\overline{t}}$ denote the causal structure obtained by removing all incoming edges of $t$ in Figure \ref{illustration}, $G_{\underline{t}}$ denote the structure by deleting all outgoing edges of $t$. 
	
	\noindent Noting that $y \bigCI \mathbf{z}_t | t, \mathbf{z}_y, \mathbf{z}_c$ in $G_{\overline{t}}$, using the three rules of do-calculus we can remove $\mathbf{z}_t$ from the conditioning set and obtain 	$P(y|do(t),\mathbf{x}) = P(y|do(t), \mathbf{z}_t, \mathbf{z}_c, \mathbf{z}_y) = P(y | do(t), \mathbf{z}_y, \mathbf{z}_c)$.
	with Rule 1. Furthermore, using Rule 2 with $(y \bigCI t | \mathbf{z}_c, \mathbf{z}_y)$ in $G_{\underline{t}}$ yields $P(y|do(t),\mathbf{x}) =P(y | do(t), \mathbf{z}_y, \mathbf{z}_c) = P(y | t, \mathbf{z}_y, \mathbf{z}_c)$. \qedhere
\end{proof}

\begin{figure*}[!t]
	\centering
	\begin{subfigure}[b]{0.96\columnwidth}
		\centering
		\includegraphics[height=1.8in]{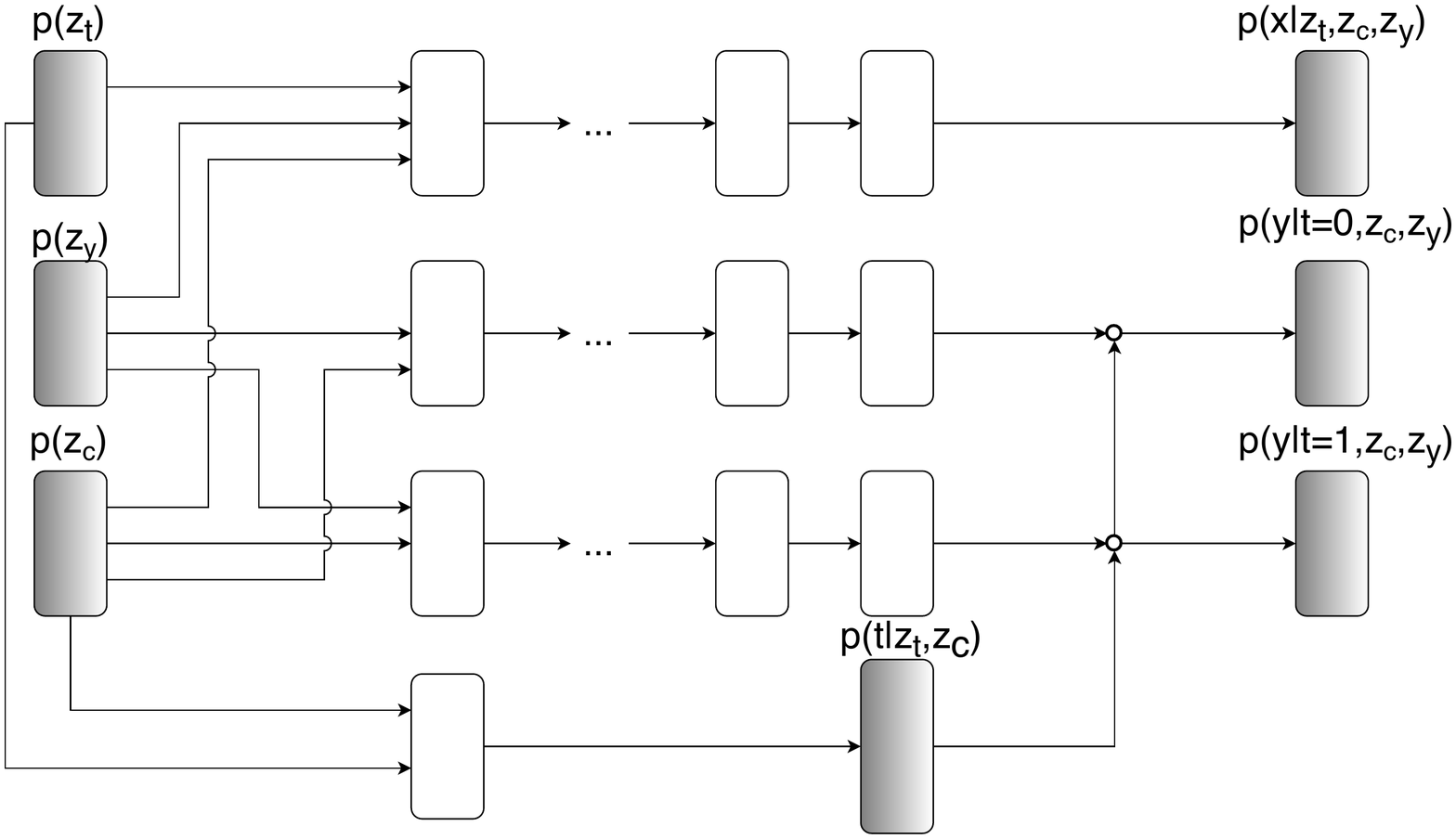}
		\caption{Generative Model.}
	\end{subfigure}
	~
	\begin{subfigure}[b]{0.96\columnwidth}
		\centering
		\includegraphics[height=1.6in]{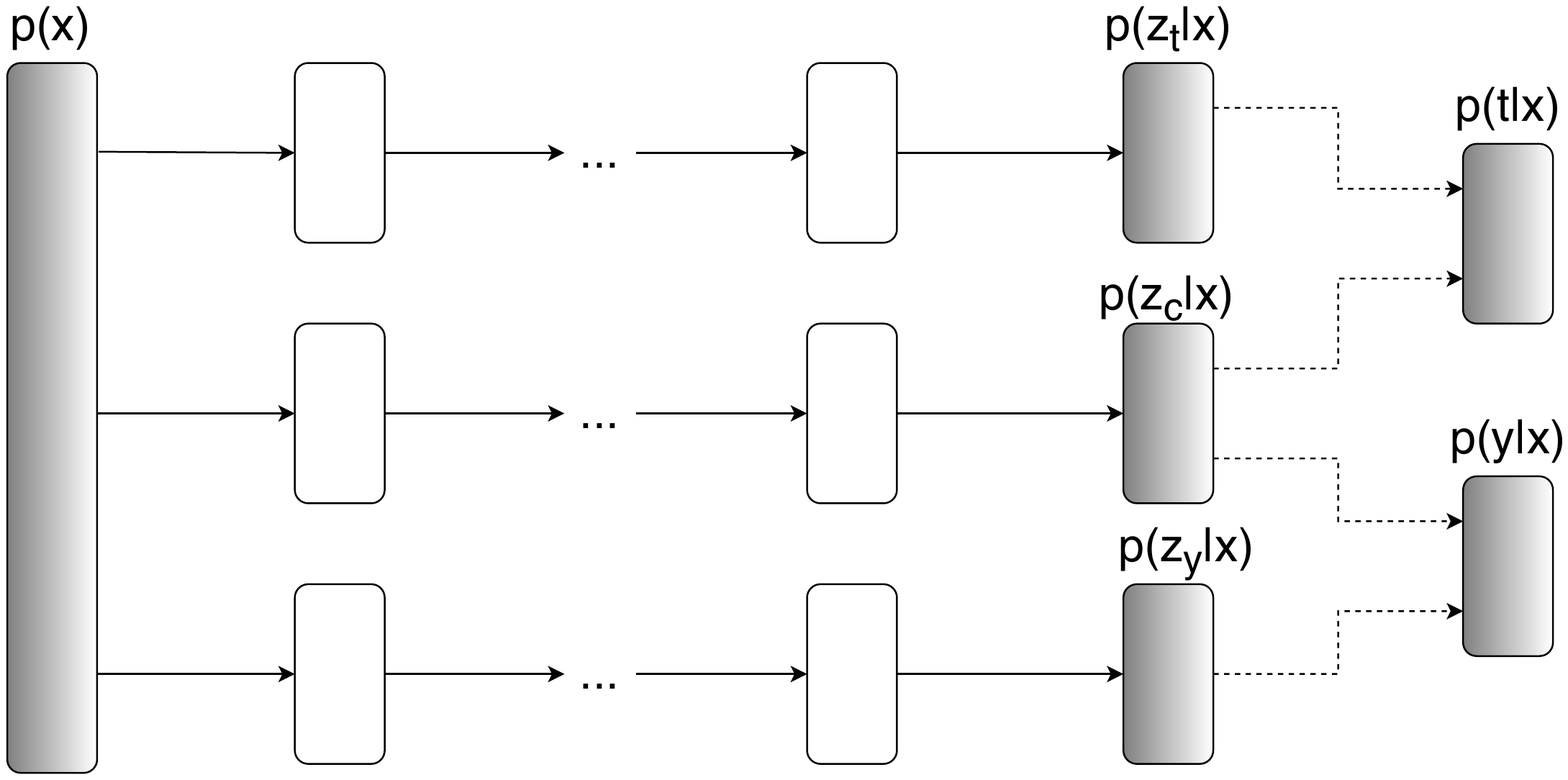}
		\caption{Inference Model.}
	\end{subfigure}
	\caption{Overall architecture of the model network and the inference network for the Treatment  Effect Disentangling Variational AutoEncoder (TEDVAE). White nodes correspond to parametrized deterministic neural network transitions, gray nodes correspond to drawing samples from the respective distribution and white circles correspond to switching paths according to the treatment $t$. Dashed arrows in the inference model represent the two auxiliary classifiers $q_{\omega_t}(t|\mathbf{z}_t,\mathbf{z}_c)$ and $q_{\omega_y} (y|\mathbf{z}_y,\mathbf{z}_c)$.}
	\label{architecture}
\end{figure*}

An implication of Theorem \ref{ATE} and \ref{ITE} is that they are not restricted to binary treatment. In other words, our proposed method can be used for estimating treatment effect of a continuous treatment variable, while most of the existing estimators are not able to do so. However, due to the lack of datasets with continuous treatment variables for evaluating this, we focus on the case of binary treatment variable and leave the continuous treatment case for future work.

Theorems \ref{ATE} and \ref{ITE} suggest that disentangling the confounding factors allows us to exclude unnecessary factors when estimating ATE and CATE. However, keen readers may wonder since we already assumed unconfoundedness, doesn't straightforwardly adjusting for $\mathbf{x}$ suffice? 

Theoretically, it has been shown that both the bias \cite{Abadie2006} and the variance \cite{Hahn1998} of treatment effect estimation will increase if variables unrelated to the outcome is included during the estimation. Therefore, it is crucial to differentiate the instrumental, confounding and risk factors and only use the appropriate factors during treatment effect estimation. 
In the next section, we propose our data-driven approach to learn and disentangle the latent factors using a variational autoencoder.


\subsection*{Learning of the Disentangled Latent Factors}

In the above discussion, we have seen that removing unnecessary factors is crucial for efficient and accurate treatment effect estimation. We have assumed that the mechanism which generates the observed variables $\mathbf{x}$ from the latent factors $\mathbf{z}$ and the decomposition of latent factors $\mathbf{z} = (\mathbf{z}_t, \mathbf{z}_c, \mathbf{z}_y)$ are available.
However, in practice both the mechanism and the decomposition are not known. Therefore, the practical approach would be to utilize the complete set of available variables during the modelling to ensure the satisfaction of the unconfoundedness assumption, and utilize a data-driven approach to simultaneously learn and disentangle the latent factors into disjoint subsets.

To this end, our goal is to learn the posterior distribution $p(\mathbf{z}|\mathbf{x})$ for the set of latent factors  with $\mathbf{z}= (\mathbf{z}_t, \mathbf{z}_y, \mathbf{z}_c)$ as illustrated in Figure \ref{illustration}, where $\mathbf{z}_t,\mathbf{z}_c, \mathbf{z}_y$ are independent of each other and correspond the instrumental factors, confounding factors, and risk factors, respectively.
Because exact inference would be intractable, we employ neural network based variational inference to approximate the posterior $p_\theta(\mathbf{x}|\mathbf{z}_t, \mathbf{z}_c, \mathbf{z}_y)$. Specifically, we utilize three separate encoders $q_{\phi_t}(\mathbf{z}_t|\mathbf{x})$, $q_{\phi_c}(\mathbf{z}_c|\mathbf{x})$, and $q_{\phi_y}(\mathbf{z}_y|\mathbf{x})$ that serve as variational posteriors over the latent factors. 
These latent factors are then used by a single decoder $p_\theta(\mathbf{x}|\mathbf{z}_t,\mathbf{z}_c, \mathbf{z}_y)$ for the reconstruction of $\mathbf{x}$. Following standard VAE design, the prior distributions $p(\mathbf{z}_t),p(\mathbf{z}_c),p(\mathbf{z}_y)$ are chosen as Gaussian distributions \cite{Kingma2014}. 

Specifically, the factors and the generative models for $\mathbf{x}$ and $t$ are described as:
\begin{align}
&p(\mathbf{z}_t) = \prod\limits_{j=1}^{D_{z_t}} \mathcal{N}(z_{tj}|0,1);\quad
p(\mathbf{z}_c) = \prod\limits_{j=1}^{D_{z_c}} \mathcal{N}(z_{cj}|0,1); \nonumber\\
&p(\mathbf{z}_y) = \prod\limits_{j=1}^{D_{z_y}} \mathcal{N}(z_{yj}|0,1);\quad p(t|\mathbf{z}_t,\mathbf{z}_c) = Bern(\sigma(f_1(\mathbf{z}_t,\mathbf{z}_c))\nonumber\\
& p(\mathbf{x}|\mathbf{z}_t,\mathbf{z}_c, \mathbf{z}_y) = \prod\limits_{j=1}^{d} p(x_j|\mathbf{z}_t,\mathbf{z}_c, \mathbf{z}_y),
\end{align}
with $p(x_j|\mathbf{z}_t,\mathbf{z}_c, \mathbf{z}_y)$ being the suitable distribution for the $j$-th observed variable, $f_1$ is a function parametrized by neural network, and $\sigma(\cdot)$ being the logistic function, $D_{z_t},D_{z_c}$, and $D_{z_y}$ are the parameters that determine the dimensions of instrumental, confounding, and risk factors to infer from $\mathbf{x}$. 

For continuous outcome variable $y$, we parametrize it as using a Gaussian distribution with its mean and variance given by a pair of disjoint neural networks that defines $p(y|t=1,\mathbf{z}_c, \mathbf{z}_y)$ and $p(y|t=0,\mathbf{z}_c, \mathbf{z}_y)$. This pair of disjoint networks allows for highly imbalanced treatment. Specifically, for continuous $y$ we parametrize it as:
\begin{align}
&p(y|t, \mathbf{z}_c, \mathbf{z}_y)  = \mathcal{N}(\mu =\hat{\mu}, \sigma^2 = \hat{\sigma}^2),\nonumber\\
&\hat{\mu} = (tf_2(\mathbf{z}_c, \mathbf{z}_y) + (1-t)f_3(\mathbf{z}_c, \mathbf{z}_y)), \nonumber\\
&\hat{\sigma}^2 = (tf_4(\mathbf{z}_c, \mathbf{z}_y) + (1-t)f_5(\mathbf{z}_c, \mathbf{z}_y)),
\end{align}
where $f_2$, $f_3$, $f_4$, $f_5$ are neural networks parametrized by their own parameters.
The distribution for the binary outcome case can be similarly parametrized with a Bernoulli distribution. 

In the inference model, the variational approximations of the posteriors are defined as:
\begin{align}
&q_{\phi_{t}}(\mathbf{z_t}|\mathbf{x}) = \prod\limits_{j=1}^{D_{z_t}} \mathcal{N}(\mu = \hat{\mu}_{t}, \sigma^2 = \hat{\sigma}_t^2 );\nonumber\\
&q_{\phi_{c}}(\mathbf{z_c}|\mathbf{x}) = \prod\limits_{j=1}^{D_{z_c}} \mathcal{N} (\mu = \hat{\mu}_{c}, \sigma^2 = \hat{\sigma}_c^2 );\nonumber\\
&q_{\phi_{y}}(\mathbf{z_y}|\mathbf{x}) = \prod\limits_{j=1}^{D_{z_y}} \mathcal{N} (\mu = \hat{\mu}_{y}, \sigma^2 = \hat{\sigma}_y^2 )  
\end{align}
where $\hat{\mu}_t, \hat{\mu}_c, \hat{\mu}_y$ and $ \hat{\sigma}_t^2,  \hat{\sigma}_c^2,  \hat{\sigma}_y^2$ are the means and variances of the Gaussian distributions parametrized by neural networks similarly to the $\hat{\mu}$ and $\hat{\sigma}$ in the generative model. 

Given the training samples, the parameters can be optimized by maximizing the evidence lower bound (ELBO):

\begin{align}
\mathcal{L}_{\textrm{ELBO}}(\mathbf{x},y,t) = & \mathbb{E}_{q_{\phi_c}{q_{\phi_t}}{q_{\phi_y}}} [\log p_\theta (\mathbf{x}|\mathbf{z}_t, \mathbf{z}_c, \mathbf{z}_y)] \nonumber\\
& -  D_{KL} (q_{\phi_t}(\mathbf{z}_t|\mathbf{x})|| p_{\theta_t}(\mathbf{z}_t)) \nonumber\\
& -  D_{KL} (q_{\phi_c}(\mathbf{z}_c|\mathbf{x})|| p_{\theta_c}(\mathbf{z}_c)) \nonumber\\
& -  D_{KL} (q_{\phi_y}(\mathbf{z}_y|\mathbf{x})|| p_{\theta_y}(\mathbf{z}_y)).
\end{align}
To  encourage the disentanglement of the latent factors and ensure that the treatment $t$ can be predicted from $\mathbf{z}_t$ and $\mathbf{z}_c$, and the outcome $y$ can be predicted from $\mathbf{z}_y$ and $\mathbf{z}_c$,  we add two auxiliary classifiers to the variational lower bound. Finally, the objective of TEDVAE can be expressed as
\begin{align}
\mathcal{L}_{\text{TEDVAE}} = & \mathcal{L}_{\textrm{ELBO}}(\mathbf{x},y,t)\nonumber\\
& + \alpha_t \mathbb{E}_{q_{\phi_{t}} q\phi_{c}} [\log q_{\omega_t}(t|\mathbf{z}_t,\mathbf{z}_c)]\nonumber \\
&+  \alpha_y \mathbb{E}_{q_{\phi_{y}} q\phi_{c}} [\log q_{\omega_y}(y|t, \mathbf{z}_c, \mathbf{z}_y)],
\label{loss_function}
\end{align}
where $\alpha_t$ and $\alpha_y$ are the weights for the auxiliary objectives.

For predicting the CATEs of new subjects given their observed covariates $\mathbf{x}$, we use the encoders $q(\mathbf{z}_y|\mathbf{x})$ and $q(\mathbf{z}_c|\mathbf{x})$ to sample the posteriors of the confounding and risk factors for $l$ times, and average over the predicted outcome $y$ using the auxiliary classifier $q_{\omega_y}(y|t, \mathbf{z}_c, \mathbf{z}_y)$. 

An important difference between TEDVAE and CEVAE lies in their inference models. 
During inference, CEVAE depends on $t$, $x$ and $y$ for inferencing $\mathbf{z}$; in other words, CEVAE needs to estimate $p(t|\mathbf{x})$ and $p(y|t,\mathbf{x})$, inference $\mathbf{z}$ as $p(\mathbf{z}|t,y,\mathbf{x})$, and finally predict the CATE as $\hat{\tau} (\mathbf{x}) = \mathbb{E}[y|t=1,\mathbf{z}] - \mathbb{E}[t|y=0, \mathbf{z}]$.
The estimations of $p(t|\mathbf{x})$ and $p(y|t,\mathbf{x})$ are unnecessary since we assume that $t$ and $y$ are generated by the latent factors and inferencing the latents should only depend on $\mathbf{x}$ as in TEDVAE. 
As we later show in the experiments, this difference is crucial even when no instrumental or risk factors are present in the data. 



\section*{Experiments}
We empirically compare TEDVAE with traditional and neural network based treatment effect estimators. 
For traditional methods, we compare with tailor designed methods including Squared t-statistic Tree (t-stats) \cite{Su2009} and Causal Tree (CT) \cite{Athey2015}; ensemble methods including Causal Random Forest (CRF) \cite{Wager2018}, Bayesian Additive Regression Trees (BART) \cite{Hill2011}, and meta algorithm X-Learner \cite{Kuenzel2019} using Random Forest \cite{Breiman1984} as base learner (X-RF).  
For deep learning based methods, we compare with representation learning based methods including Counterfactual Regression Net (CFR) \cite{Shalit2016}, 
Similarity Preserved Individual Treatment Effect (SITE) \cite{Yao2018_Twin}, and with a deep learning variable decomposition method for Counterfactual Regression (DR-CFR) \cite{Hassanpour2020}.
We also compare with generative methods including Causal Effect Variational Autoencoder (CEVAE) \cite{Louizos2017} and GANITE \cite{Yoon2018}.
Parameters for the compared methods are tuned by cross-validated grid search on the value ranges recommended in the code repository. The code is available at https://github.com/WeijiaZhang24/TEDVAE.
\begin{figure}[!t]
	\centering
	\begin{subfigure}{0.15\textwidth}
		\includegraphics[width=\linewidth]{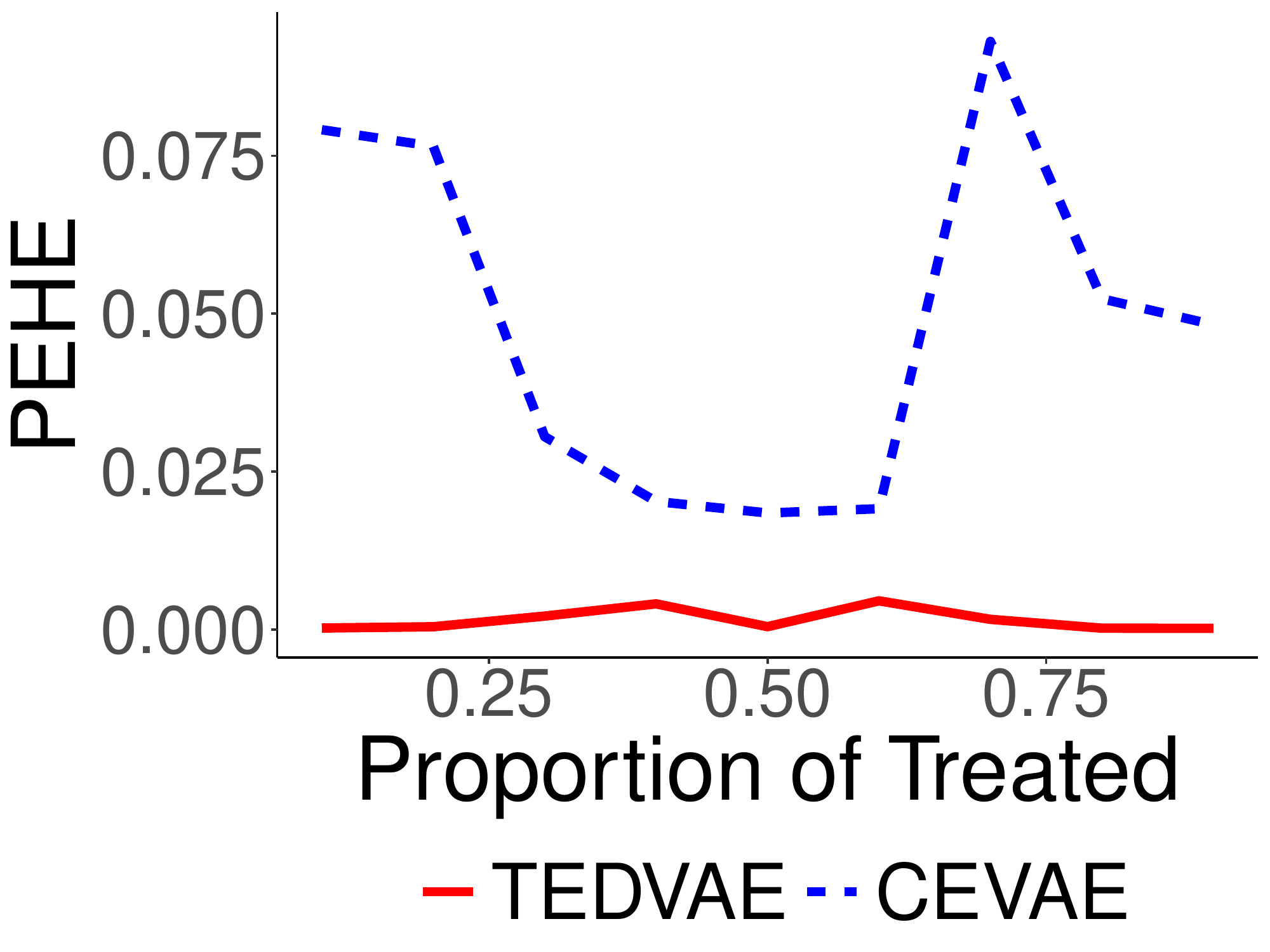}
	\end{subfigure}
	\begin{subfigure}{0.15\textwidth}
		\includegraphics[width=\linewidth]{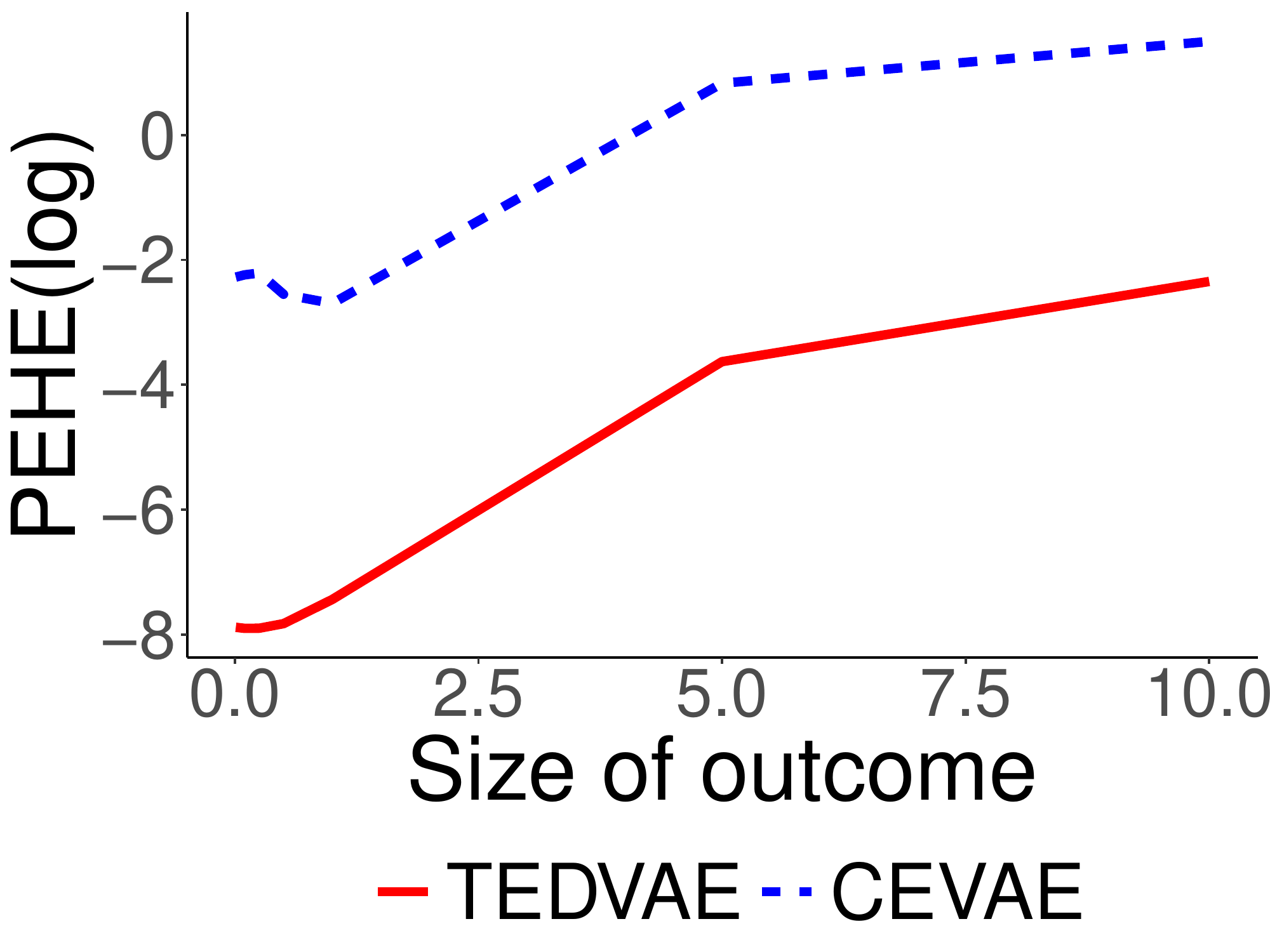}
	\end{subfigure}
	\begin{subfigure}{0.15\textwidth}
		\includegraphics[width=\linewidth]{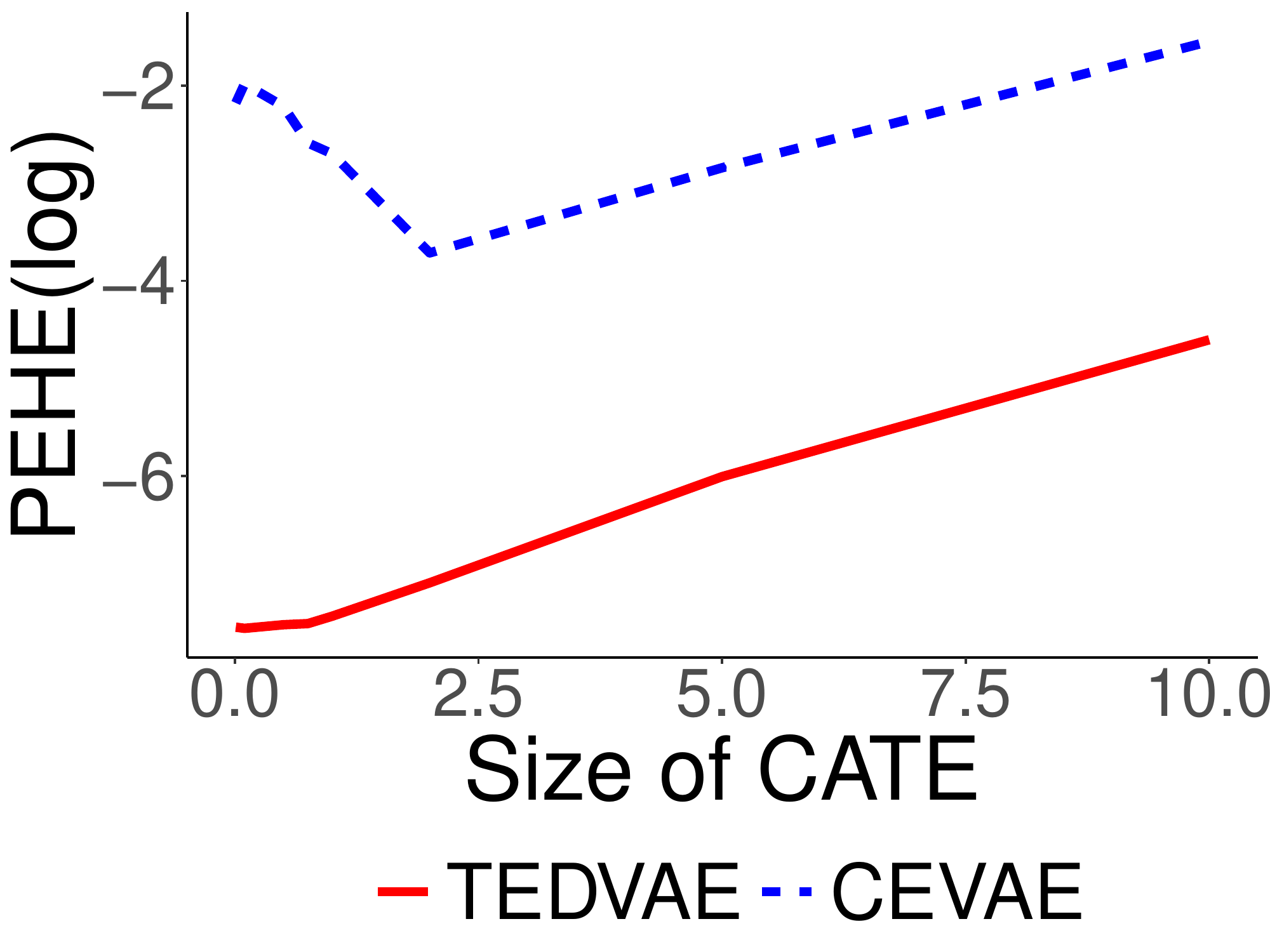}
	\end{subfigure}	
	\\
	\begin{subfigure}{0.15\textwidth}
		\includegraphics[width=\linewidth]{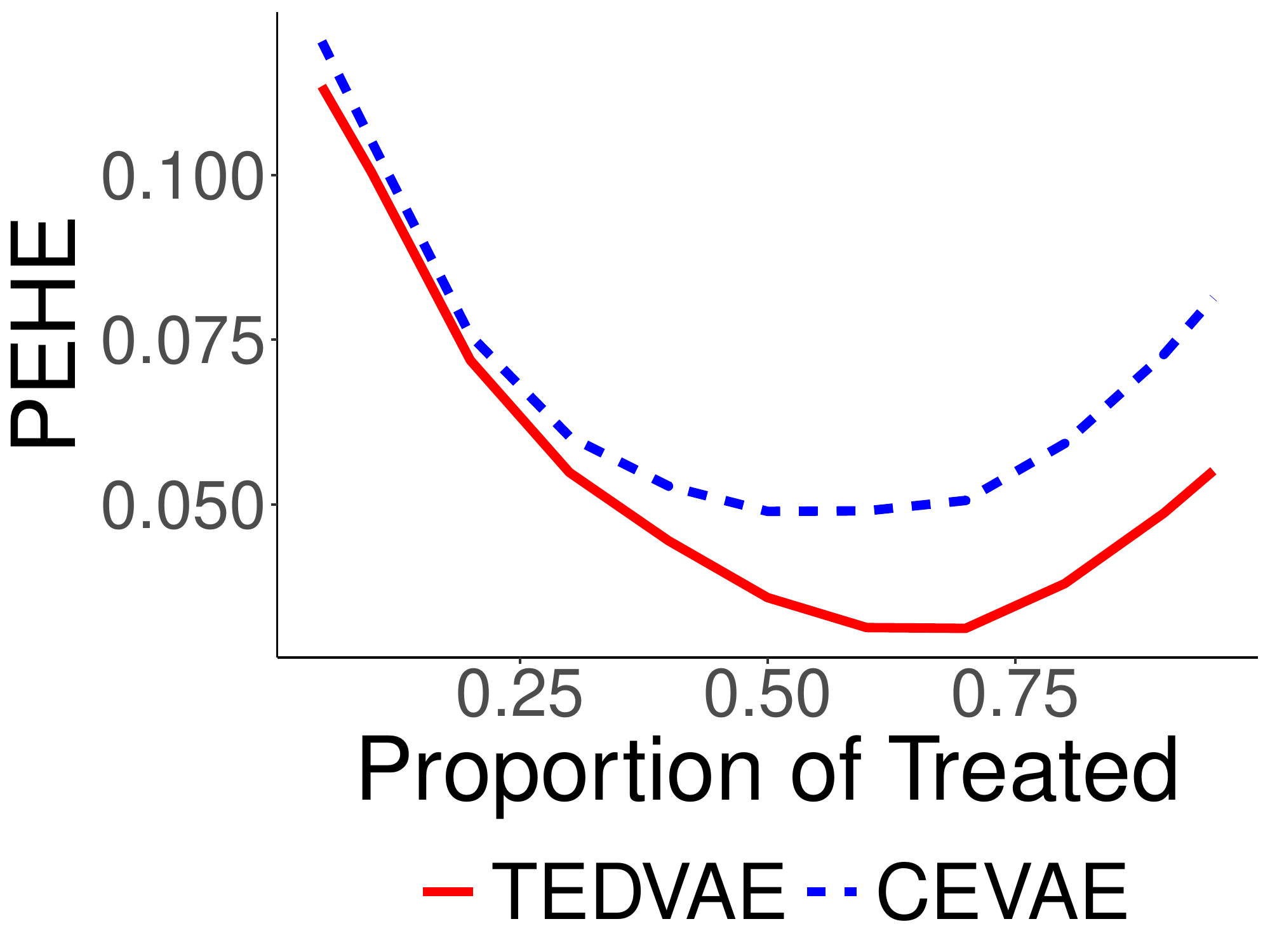}
	\end{subfigure}
	\begin{subfigure}{0.15\textwidth}
		\includegraphics[width=\linewidth]{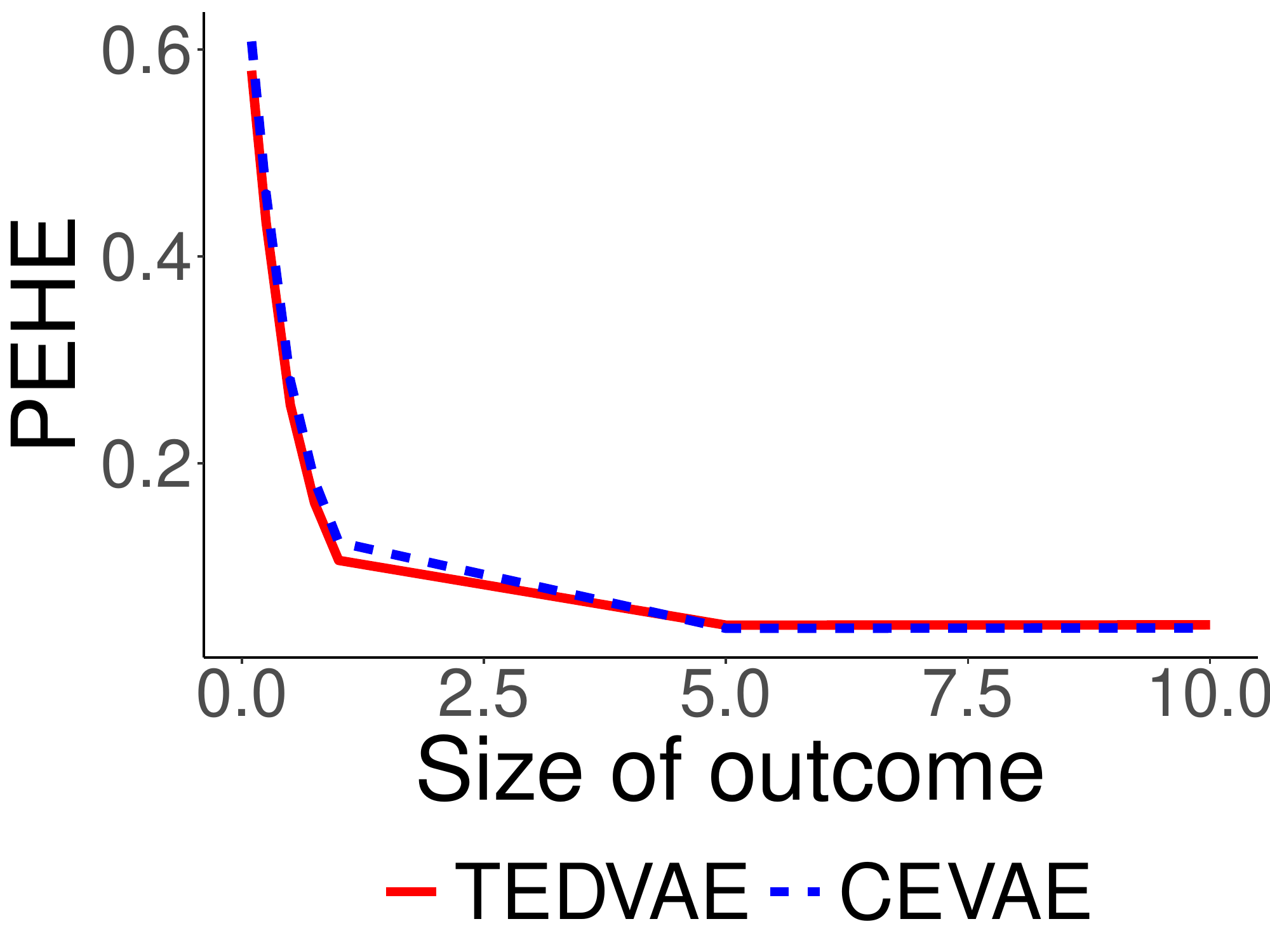}
	\end{subfigure}
	\begin{subfigure}{0.15\textwidth}
		\includegraphics[width=\linewidth]{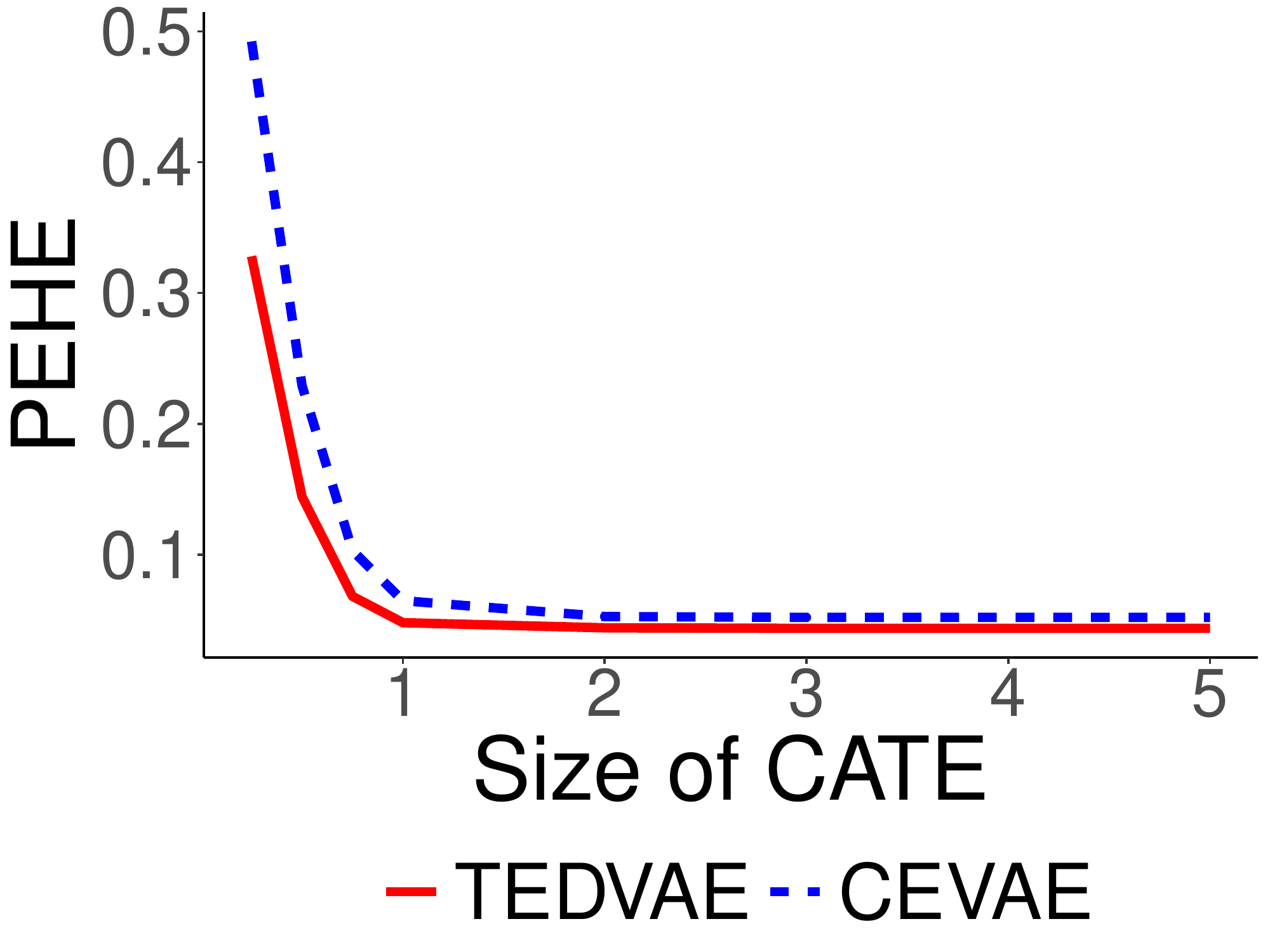}
	\end{subfigure}
	\caption{Comparison of CEVAE and TEDVAE under different settings using the synthetic datasets. Rows: the results for data generating procedure satisfies the assumption of the TEDVAE model and the CEVAE model, respectively. Columns: (Left) Varying the proportional of treated samples; (Middle) Varying the size of the outcome; (Right) Varying the size of the CATE. (Figures are best viewed in colour.)
	}
	\label{synthetic}
\end{figure}

\subsection*{Evaluation Criteria}

For evaluating the performance of CATE estimation, we use the Precision in Estimation of Heterogeneous Effect (PEHE) \cite{Hill2011,Shalit2016,Louizos2017,Dorie2019} which measures the root mean squared distance between the estimated and the true CATE when ground truth is available: $\epsilon_{\text{PEHE}} = \sqrt{\frac{1}{N}\sum_{i=1}^{N} (\hat{\tau}(\mathbf{x}_i) - \tau(\mathbf{x}_i))^2}$
%
, where $\tau(\mathbf{x})$ is the ground truth CATE for subjects with observed variables $\mathbf{x}_i$.

For evaluating the performance of the average treatment effect (ATE) estimation, the ground truth ATE can be calculated by averaging the differences of the outcomes in the treated and control groups if randomized controlled trials data is available. Then, when comparing the ground truth ATE with the estimated ATE obtained from a non-randomized sample (observational sample or created via biased sampling) of the dataset, the performances can then be evaluated using the mean absolute error in ATE \cite{Hill2011,Shalit2016,Louizos2017,Yao2018_Twin} for evaluation: $\epsilon_{\text{ATE}} = \vert \hat{\tau} - \frac{1}{N}\sum\limits_{i=1}^N[t_i y_i - (1-t_i)y_i]\vert$, 
where $\hat{\tau}$ is the estimated ATE, $t_i$ and $y_i$ are the treatments and outcomes from the randomized data. For both $\epsilon_{\text{ATE}}$ and $\epsilon_{\text{PEHE}}$, we use superscripts $tr$ and $te$ to denote their values on the training and test sets, respectively.

\subsection*{Synthetic Datasets}
We first conduct experiments using synthetic datasets to investigate TEDVAE's capability of inferring the latent factors and estimating the treatment effects. Due to the page limit, we only provide an outline of the synthetic dataset and provide the detailed settings in the supplementary materials.

The first setting of synthetic datasets studies the benefit of disentangling the confounding factors from instrumental and risk factors, and are generated using the structure depicted in Figure \ref{illustration}. 
We illustrate the results in the first row of Figure \ref{synthetic}. It can be seen that when the instrumental and risk factors exist in the data, the benefit of disentanglement is signficance as demonstrated by the PEHE curves between TEDVAE and CEVAE. When the proportions of the treated samples varies, the performances of CEVAE fluctuates severely and the error remains high even when the dataset is balanced; however, the PEHEs of TEDVAE are stable even when the dataset is highly imbalanced, and are always stays significantly lower than CEVAE. When the scales of outcome and CATE change, TEDVAE also performs consistently and significantly better than CEVAE.

The second setting for the synthetic datasets are designed to study how TEDVAE performs when the instrumental and risk factors are absent, and follow the same data generating procedure as in the CEVAE \cite{Louizos2017}.
We illustrate the results of this synthetic dataset in the second row of Figure \ref{synthetic}. Therefore, it is reasonable to expect that CEVAE would perform better than TEDVAE since the instrumental factors $\mathbf{z}_t$ and risk factors $\mathbf{z}_y$ do not exist.
However, from the second row of Figure \ref{synthetic} we can see that TEDVAE either performs better than  CEVAE, or performs as well as CEVAE using a wide range of parameters under this setting. 
This is possibly due to the differences in predicting for previous unseen samples between TEDVAE and CEVAE, where CEVAE needs to follow a complicated procedure of inferencing $p(t|\mathbf{x})$ and $p(y|t,\mathbf{x})$ first and then inferencing the latents as $p(z|t,y,\mathbf{x})$, whereas in TEDVAE this is not needed.
These results suggests that the TEDVAE model is able to effectively learn the latent factors and estimate the CATE even when the instrumental and risk factors are absent. It also indicates that the TEDVAE algorithm is robust to the selection of the latent dimensionality parameters.

Next, we investigate whether TEDVAE is capable of recovering the latent factors of $\mathbf{z}_t$, $\mathbf{z}_c$, and $\mathbf{z}_y$ that are used to generate the observed covariates $\mathbf{x}$. To do so, we compare the performances of TEDVAE when setting the $D_{z_t}$, $D_{z_c}$ and $D_{z_y}$ parameters to 10 against itself when setting one of the latent dimensionality parameter of TEDVAE to $0$, i.e., setting $D_{z_t}=0$ and forcing TEDVAE to ignore the existence of $\mathbf{z}_t$. If TEDVAE is indeed capable of recovering the latent factors, then its performances with non-zero latent dimensionality parameters should be better than its performance when ignoring the existence of any of the latent factors. 
Figure \ref{radar} illustrates the capability of TEDVAE for identifying the latent factors using radar chart. Taking the Figure \ref{radar}(a) as example, the $z_t$ and $\neg z_t$ polygons correspond to the performances of TEDVAE when setting the dimension parameter $D_{z_t}=5$ (identify $\mathbf{z}_t$) and $D_{z_t}=0$ (ignore $\mathbf{z}_t$). From the figures we can clearly see that the performances of TEDVAE are significantly better when the latent dimensionality parameters are set to non-zero, than setting any of the latent dimensionality to 0. 

\begin{figure}[!t]
	\centering
	\begin{subfigure}{0.15\textwidth}
		\includegraphics[width = \linewidth]{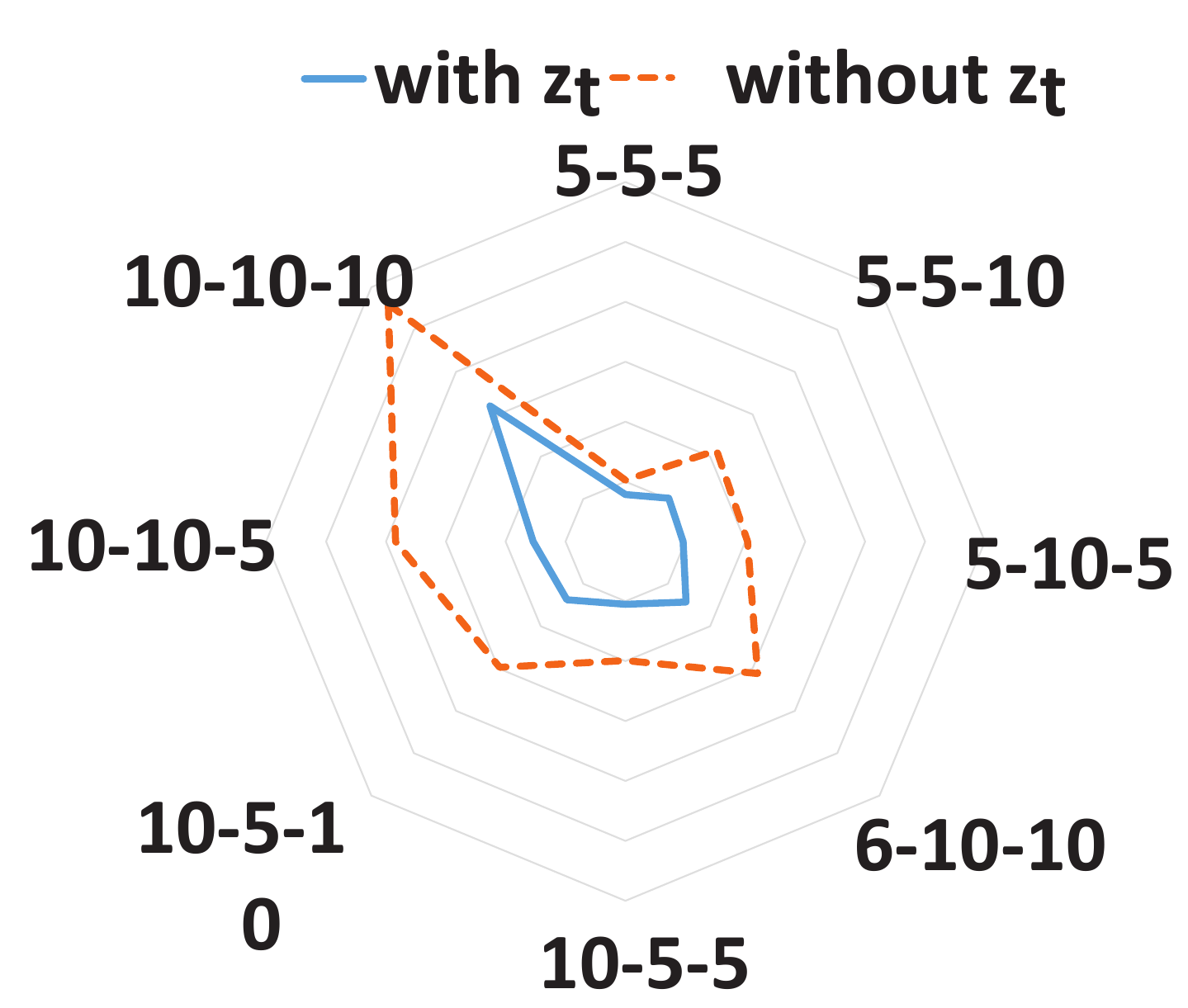}
		\caption{}
	\end{subfigure}
		\begin{subfigure}{0.15\textwidth}
			\includegraphics[width = \linewidth]{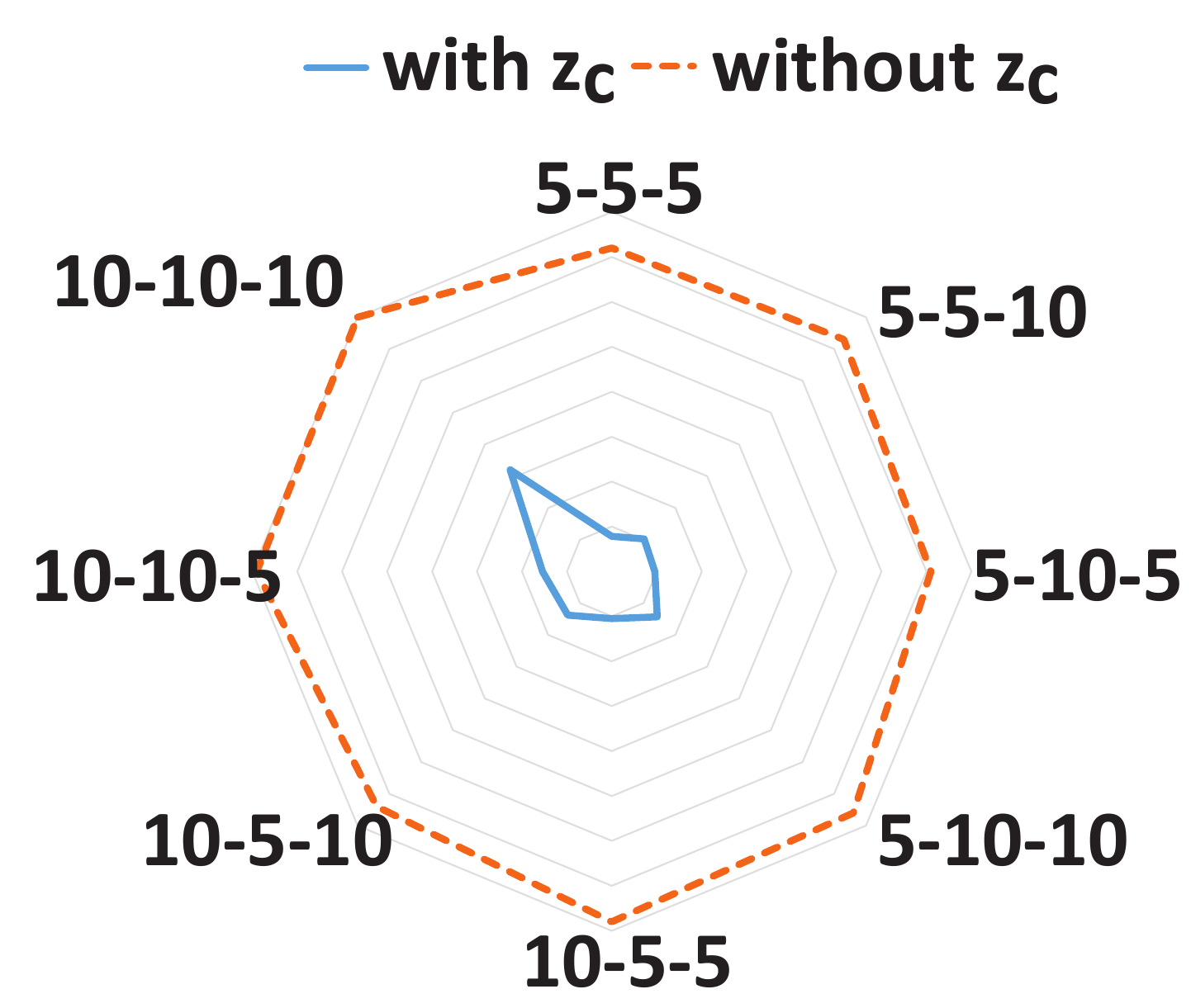}
			\caption{}
		\end{subfigure}
	\begin{subfigure}{0.15\textwidth}
		\includegraphics[width = \linewidth]{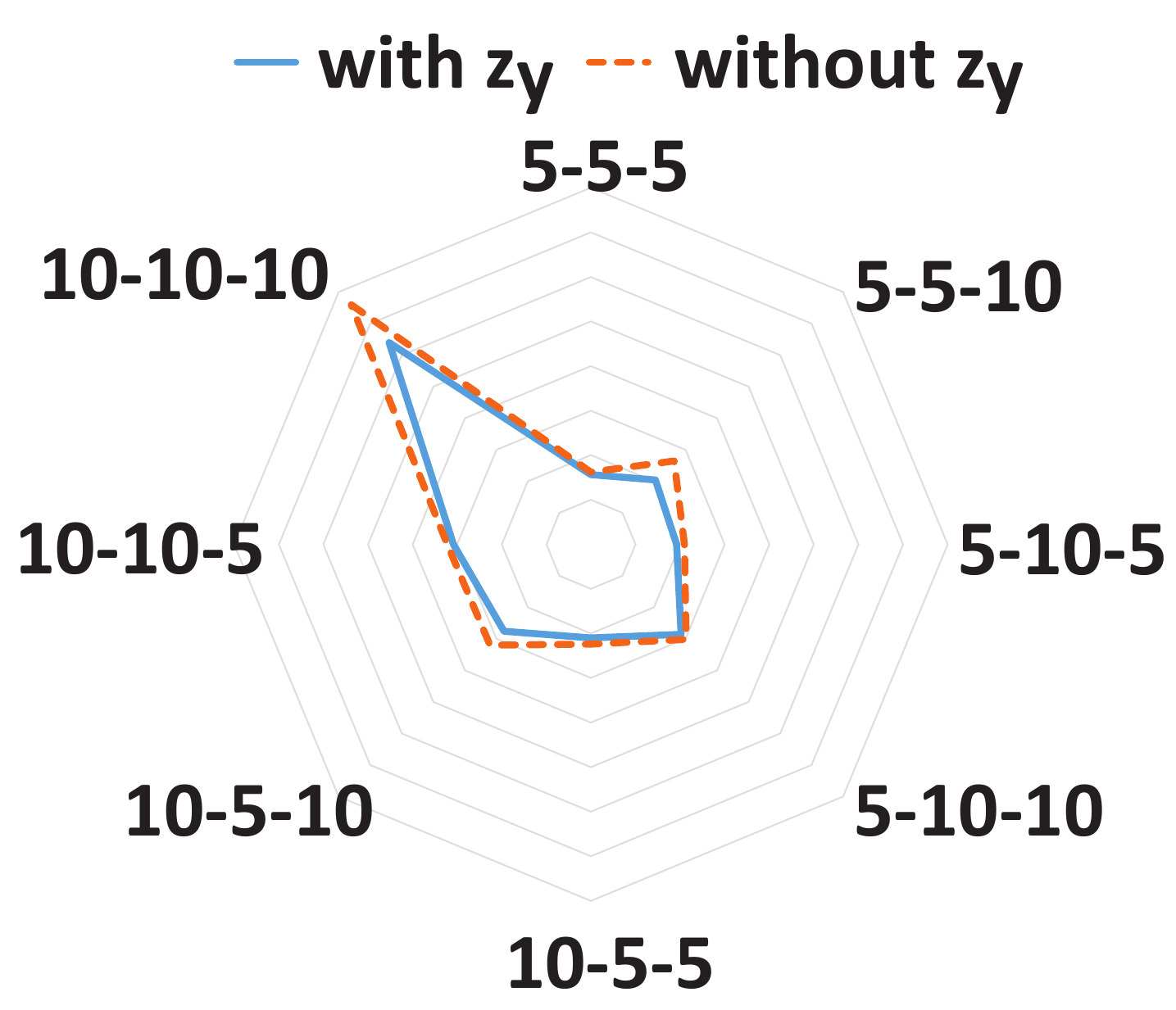}
		\caption{}
	\end{subfigure}
	\caption{Radar charts for TEDVAE's capability in identifying the latent factors. Each vertex on the polygons is identified with the latent factors’ dimension sequence of the associated synthetic dataset. For example, 5-5-5 indicates that the dataset is generated using 5 dimensions each for the instrumental, confounding, and risk factors.
	}
	\label{radar}
\end{figure}

\subsection*{Benchmarks and Real-world Datasets}
In this section, we use two benchmark datasets for treatment effect estimation to compare TEDVAE with the baselines. 
\subsubsection{Benchmark I: 2016 Atlantic Causal Inference Challenge}

\begin{table}[!t]
	\centering		
	\begin{tabular}{l | c c }
		\hline
		Methods & $\epsilon_{\text{PEHE}}^{tr}$ & $\epsilon_{\text{PEHE}}^{te}$ \\
		\hline
		CT &  4.81$\pm$0.18 & 4.96$\pm$0.21 \\
		t-stats &  5.18$\pm$0.18 & 5.44$\pm$0.20 \\
		\hline
		CF	& 2.16$\pm$0.17 & 2.18$\pm$0.19 \\
		BART & 2.13$\pm$0.18 & 2.17$\pm$0.15 \\
		X-RF & 1.86$\pm$0.15 & 1.89$\pm$0.16   \\
		\hline
		CFR	 & 2.05$\pm$0.18 & 2.18$\pm$0.20 \\
		SITE & 2.32$\pm$0.19 & 2.41$\pm$0.23  \\
		DR-CFR & 2.44$\pm$0.20& 2.56$\pm$0.21 \\
		\hline
		GANITE & 2.78$\pm$0.56 & 2.84$\pm$ 0.61 \\
		CEVAE & 3.12$\pm$0.28 & 3.28$\pm$0.35 \\
		TEDVAE & \textbf{1.75$\pm$0.14} &  \textbf{1.77$\pm$0.17}  \\
		\hline
	\end{tabular}
	\caption{Means and standard deviations of the PEHE metrics (smaller is better) for training and test sets on the 77 benchmark datasets from ACIC2016. The bolded values indicate the best performers (Wilcoxon signed rank tests at $p=0.05$).}
	\label{acic2016}
\end{table}

The 2016 Atlantic Causal Inference Challenge (ACIC2016) \cite{Dorie2019} contains 77 different settings of benchmark datasets that are designed to test causal inference algorithms under a diverse range of real-world scenarios. 
The dataset contains 4802 observations and 58 variables. The outcome and treatment variables are generated using different data generating procedures for the 77 settings, providing benchmarks for a wide range of treatment effect estimation scenarios. This dataset can be accessed at \url{https://github.com/vdorie/aciccomp/tree/master/2016}. 

We report the average PEHE metrics across 77 settings where each setting is repeated for 10 replications. For TEDVAE, the parameters are selected using the average of the first five settings, instead of tuning separately for the 77 settings. This approach has two benefits: firstly and most importantly, if an algorithm performs well using the same parameters across all 77 settings, it indicates that the algorithm is not sensitive to the choice of parameters and thus would be easier for practitioners to use in real-world scenarios; the second benefit is to save computation costs, as conducting parameter tuning across a large amount of datasets can be computationally overwhelming for practitioners.
As a result, we set the latent dimensionality parameters as $D_{z_y}=5$, $D_{z_t}=15$, $D_{z_c}=15$ and set the weight for auxiliary losses as $\alpha_t=\alpha_y=100$. For all the parametrized neural networks, we use 5 hidden layers and 100 hidden neurons in each layer, with ELU activation.   with a 60\%/30\%/10\% train/validation/test splitting proportions.

The results on the ACIC2016 datasets are reported in Table \ref{acic2016}. We can see that TEDVAE performs significantly better than the compared methods.
These results show that, without tuning parameters individually for each setting, TEDVAE achieves state-of-the-art performances across diverse range of data generating procedures, which empirically demonstrates that TEDVAE is effective for treatment effect estimation across different settings.

\begin{table}[!t]
	\centering		
	\setlength{\tabcolsep}{3pt}
	\begin{tabular}{l | c c | c c }
		\hline
		& \multicolumn{2}{c|}{Setting A} & \multicolumn{2}{c}{Setting B}\\
		\hline
		Methods & $\epsilon_{\text{PEHE}}^{tr}$ & $\epsilon_{\text{PEHE}}^{te}$ & $\epsilon_{\text{PEHE}}^{tr}$ & $\epsilon_{\text{PEHE}}^{te}$ \\
		\hline
		CT &  1.48$\pm$0.12 & 1.56$\pm$0.13 & 5.46$\pm$0.08 & 5.73$\pm$0.09\\				
		t-stats &  1.78$\pm$0.09 & 1.91$\pm$0.12 & 5.40$\pm$0.08 & 5.71$\pm$0.09\\
		\hline
		CF	& 1.01$\pm$0.08 & 1.09$\pm$0.16 & 3.86$\pm$0.05 & 3.91$\pm$0.07\\
		BART & 0.87$\pm$0.07 & 0.88$\pm$0.07 & 2.78$\pm$0.03 & 2.91$\pm$0.04\\
		X-RF & 0.98$\pm$0.08 & 1.09$\pm$0.15 & 3.50$\pm$0.04 & 3.59$\pm$0.06  \\
		\hline
		CFR	 & 0.67$\pm$0.02 & 0.73$\pm$0.04 & 2.60$\pm$0.04 & 2.76$\pm$0.04 \\
		
		SITE & 0.65$\pm$0.07 & 0.67$\pm$0.06 & 2.65$\pm$0.04 & 2.87$\pm$0.05 \\
		DR-CFR  & 0.62$\pm$0.15 & 0.65$\pm$0.18 & 2.73$\pm$0.04 & 2.93$\pm$0.05\\
		\hline	
		GANITE & 1.84$\pm$0.34 & 1.90$\pm$0.40 & 3.68$\pm$0.38 & 3.84$\pm$0.52\\
		CEVAE & 0.95$\pm$0.12 & 1.04$\pm$0.14 & 2.90$\pm$0.10 & 3.24$\pm$0.12\\
		TEDVAE & \textbf{0.59$\pm$0.11} & \textbf{0.60$\pm$0.14} & \textbf{2.10$\pm$0.09} & \textbf{2.22$\pm$0.08} \\
		\hline
	\end{tabular}
	\caption{Means and standard deviations of the PEHE metric (smaller is better) on IHDP. The bolded values indicate the best performers (Wilcoxon signed rank tests ($p=0.05$).}
	\label{IHDP}
\end{table}

\subsubsection{Benchmark II: Infant Health Development Program}
The Infant Health and Development Program (IHDP) is a randomized controlled study designed to evaluate the effect of home visit from specialist doctors on the cognitive test scores of premature infants. 
The datasets is first used for benchmarking treatment effect estimation algorithms in \cite{Hill2011}, where selection bias is induced by removing a non-random subset of the treated subjects to create an observational dataset, and the outcomes are simulated using the original covariates and treatments. 
It contains 747 subjects and 25 variables that describe both the characteristics of the infants and the characteristics of their mothers. 
We use the same procedure as described in \cite{Hill2011} which includes two settings of this benchmark: `Setting A" and ``Setting B", where the outcomes follow linear relationship with the variables in ``Setting A" and exponential relationship in ``Setting B". The datasets can be accessed at \url{https://github.com/vdorie/npci}.
The reported performances are averaged over 100 replications with a training/validation/test splits proportions of 60\%/30\%/10\%.

Since evaluating treatment effect estimation is difficult in real-world scenarios \cite{Alaa2019}, a good treatment effect estimation algorithm should perform well across different datasets with minimum requirement for parameter tuning.
Therefore, for TEDVAE we use the same parameters in the ACIC dataset and do not perform parameter tuning on the IHDP dataset. 
For the compared traditional methods, we also use the same parameters as selected on the ACIC benchmark. For the compared deep learning methods, we conduct grid search using the recommended parameter ranges from the relevant papers.

From Table \ref{IHDP} we can see that TEDVAE achieves the lowest PEHE errors among the compared methods on both Setting A and Setting B of the IHDP benchmark. Wilcoxon signed rank tests ($p=0.05$) indicate that TEDVAE is significantly better than the compared methods.
Since TEDVAE uses the same parameters on the IHDP datasets as in the previous ACIC benchmarks,  these results demonstrate that the TEDVAE model is suitable for diverse real-world scenarios and is robust to the choice of parameters.

\begin{table}[!t]
	\setlength{\tabcolsep}{4pt}
	\centering
	\begin{tabular}{l | c c}
		\hline
		&   \multicolumn{2}{c}{Twins} \\
		\hline
		Methods & $\epsilon_{\text{ATE}}^{tr}$ &  $\epsilon_{\text{ATE}}^{te}$  \\
		\hline
		CT &   0.034$\pm$0.002 & 0.038$\pm$0.007  \\
		t-stats & 0.032$\pm$0.003 & 0.033$\pm$0.005\\
		\hline
		CF	 & 0.025$\pm$0.001 & 0.025$\pm$0.001 \\
		BART  & 0.050$\pm$0.002 & 0.051$\pm$0.002\\
		X-RF   & 0.075$\pm$0.003 & 0.074$\pm$0.004 \\
		\hline
		CFR	    & 0.029$\pm$0.002 & 0.030$\pm$0.002\\
		
		SITE  & 0.031$\pm$0.003 & 0.033$\pm$0.003 \\ 
		DR-CFR & 0.032$\pm$0.002 & 0.034$\pm$0.003 \\
		\hline
		GANITE & 0.016$\pm$0.004 & 0.018$\pm$0.005 \\
		CEVAE   & 0.046$\pm$0.020 & 0.047$\pm$0.021 \\
		TEDVAE  & \textbf{0.006$\pm$0.002}  & \textbf{0.006$\pm$0.002} \\
		\hline
	\end{tabular}
	\label{ATE_results}
	\caption{ Means and standard deviations of $\epsilon_{\text{ATE}}$ on the Twins datasets. The bolded values indicate the best performers (Wilcoxon signed rank tests ($p=0.05$). }	
\end{table}

%
\subsubsection{Real-world Dataset: Twins}
In this section, we use a real-world randomized dataset to compare the methods capability of estimating the average treatment effects. 


The Twins dataset has been previously used for evaluating causal inference in \cite{Louizos2017,Yao2018_Twin}. It consists of samples from twin births in the U.S. between the year of 1989 and 1991 provided in \cite{Almond2005_TwinData}. Each subject is described by 40 variables related to the parents, the pregnancy and the birth statistics of the twins.
The treatment is considered as $t=1$ if a sample is the heavier one of the twins, and considered as $t=0$ if the sample is lighter. 
The outcome is a binary variable indicating the children's mortality after a one year follow-up period. 
Following the procedure in \cite{Yao2018_Twin}, we remove the subjects that are born with weight heavier than 2,000g and those with missing values, and introduced selection bias by removing a non-random subset of the subjects. The final dataset contains 4,813 samples. The data splitting is the same as previous experiments, and the reported results are averaged over 100 replications. 
The ATE estimation performances are illustrated in Table \ref{ATE_results}. 
On this dataset, we can see that TEDVAE achieves the best performance with the smallest $\epsilon_{ATE}$ among all the compared algorithms. 

Overall, the experiments results show that the performances of TEDVAE are significantly better than the compared methods on a wide range of synthetic, benchmark, and real-world datasets. In addition, the results also indicate that TEDVAE is less sensitive to the choice of parameters than the other deep learning based methods, which makes our method attractive for real-world application scenarios. 

\section*{Conclusion}
We propose the TEDVAE algorithm, a state-of-the-art treatment effect estimator which infer and disentangle three disjoints sets of instrumental, confounding and risk factors from the observed variables. 
Experiments on a wide range of synthetic, benchmark, and real-world datasets have shown that TEDVAE significantly outperforms compared baselines. 
For future work, a path worth exploring is extending TEDVAE for treatment effects with non-binary treatment variables. 
While most of the existing methods are restricted to binary treatments, the generative model of TEDVAE makes it a promising candidate for extension to treatment effect estimation with continuous treatments.

\bibliography{ijcai20}
\end{document}